\renewcommand{\b}{\mathbb}
\newcommand{\bR}{\mathbb{R}}
\newcommand{\Rmn}{\mathbb{R}^{n\times n}}
\DeclareMathOperator*{\argmin}{argmin}
\newcommand{\innerprod}[2]{\langle #1,#2 \rangle}
\renewcommand{\hat}{\widehat}
\newcommand{\set}[1]{\left\{#1\right\}}
\newcommand{\removed}[1]{}
\newtheorem*{conjecture*}{Conjecture}
\newtheorem{theorem}{Theorem}
\newtheorem{corollary}[theorem]{Corollary}
\begin{document}
\title{Implicit Regularization in Matrix Factorization}
\author{\name Suriya Gunasekar \email suriya@ttic.edu \AND \name Blake Woodworth \email blake@ttic.edu \AND \name Srinadh Bhojanapalli \email srinadh@ttic.edu \AND Behnam Neyshabur \email behnam@ttic.edu \AND Nathan Srebro \email nati@ttic.edu}

\maketitle
\begin{abstract}
We study implicit regularization when optimizing an underdetermined quadratic objective over a matrix $X$ with gradient descent on a factorization of $X$.  We conjecture and provide empirical and theoretical evidence that with small enough step sizes and initialization close enough to the origin, gradient descent on a full dimensional factorization converges to the minimum nuclear norm solution.
\end{abstract}

\section{Introduction}
When optimizing underdetermined problems with multiple global minima, the choice of optimization algorithm can play a crucial role in biasing us toward a specific global minima, even though this bias is not explicitly specified in the objective or problem formulation.  For example, using gradient descent to optimize an unregularized, underdetermined least squares problem would yield the minimum Euclidean norm solution, while using coordinate descent or preconditioned gradient descent might yield a different solution. Such implicit bias, which can also be viewed as a form of regularization, can play an important role in learning.  

In particular, implicit regularization has been shown to play a crucial role in training deep models \citep{neyshabur2015search,neyshabur2017geometry,zhang2017understanding,keskar2017large}: deep models often generalize well even when trained purely by minimizing the training error without any explicit regularization, and when there are more parameters than samples and the optimization problem is underdetermined.  Consequently, there are many zero training error solutions, all global minima of the training objective, some of which my generalize horribly.
Nevertheless, our choice of optimization algorithm, typically a variant of gradient descent, seems to prefer solutions that do generalize well.  This generalization ability cannot be explained by the capacity of the explicitly specified model class (namely, the functions representable in the chosen architecture).  Instead, it seems that the optimization algorithm biases us toward a ``simple" model, minimizing some implicit “regularization measure”, and that generalization is linked to this measure.  But what are the regularization measures that are implicitly minimized by different optimization procedures?

As a first step toward understanding implicit regularization in complex models, in this paper we carefully analyze implicit regularization in matrix factorization models, which can be viewed as two-layer networks with linear transfer.  We consider gradient descent on the entries of the factor matrices, which is analogous to gradient descent on the weights of a multilayer network.  We show how such an optimization approach can indeed yield good generalization properties even when the problem is underdetermined. We identify the implicit regularizer as the {\em nuclear norm}, and show that even when we use a full dimensional factorization, imposing no constraints on the factored matrix, optimization by gradient descent on the factorization biases us toward the minimum nuclear norm solution.  Our empirical study leads us to conjecture that with small step sizes and initialization close to zero, gradient descent converges to the minimum nuclear norm solution, and we provide empirical and theoretical evidence for this conjecture, proving it in certain restricted settings.

\section{Factorized Gradient Descent for Matrix Regression}\label{sec:gd}
We consider least squares objectives over matrices $X\in\bR^{n\times
  n}$ of the form:
\begin{equation}
\min_{X \succeq 0} F(X)=\norm{\mathcal{A}(X)-y}_2^2. 
\label{eq:lstsq}
\end{equation}
where $\mathcal{A}:\bR^{n\times n}\to \b{R}^m$ is a linear operator
specified by $\mathcal{A}(X)_i=\innerprod{A_i}{X}$, $A_i\in\bR^{n\times n}$,
and $y\in\bR^m$.  Without loss of generality, we consider only
symmetric positive semidefinite (p.s.d.) $X$ and symmetric linearly
independent $A_i$ (otherwise, consider optimization over a larger matrix $\begin{bsmallmatrix}W & X\\ X^\top& Z\end{bsmallmatrix}$ with $\mathcal{A}$
operating symmetrically  on the off-diagonal blocks).  
In particular, this setting covers
problems including matrix completion (where $A_i$ are indicators,  
\citet{candes2009exact}), matrix reconstruction from linear measurements \citep{recht2010guaranteed} and
multi-task training (where each column of $X$ is a predictor for a
deferent task and $A_i$ have a single non-zero column, \citet{argyriou2007multi,amit2007uncovering}).

We are particularly interested in the regime where $m \ll n^2$, in
which case \eqref{eq:lstsq} is an underdetermined system
with many global minima satisfying $\mathcal{A}(X)=y$.  For such
underdetermined problems, merely minimizing \eqref{eq:lstsq} cannot
ensure recovery (in matrix completion or recovery problems) or
generalization (in prediction problems).  For example, in a matrix
completion problem (without diagonal observations), we can minimize
\eqref{eq:lstsq} by setting all non-diagonal unobserved entries to zero,
or to any other arbitrary value.

Instead of working on $X$ directly, we will study a factorization
$X=UU^\top$. We can write \eqref{eq:lstsq} equivalently as optimization over $U$ as,
\begin{equation}
\min_{U\in\bR^{n\times d}} f(U)=\norm{\mathcal{A}(UU^\top)-y}_2^2. 
\label{eq:lstsq_u}
\end{equation}
When $d<n$, this imposes a constraint on the rank of $X$, but we will
be mostly interested in the case $d=n$, under which  no additional
constraint is imposed on $X$ (beyond being p.s.d.) and
\eqref{eq:lstsq_u} is equivalent to \eqref{eq:lstsq}. Thus, if $m \ll n^2$,  then \eqref{eq:lstsq_u} with $d=n$ is
similarly underdetermined and can be optimized in many ways --- estimating a global optima cannot ensure
generalization (e.g.~imputing zeros in a matrix completion
objective). Let us investigate what happens when we optimize
\eqref{eq:lstsq_u} by gradient descent on $U$.  

To simulate such a matrix reconstruction problem, we generated $m \ll n^2$
random measurement matrices and set $y = \mathcal{A}({X^*})$ according to
some planted $X^*\succeq 0$. We minimized \eqref{eq:lstsq_u} by
performing gradient descent on $U$ to convergence, and then measured
the relative reconstruction error $\norm{X-X^*}_F$.  Figure
\ref{fig:test_err_gauss} shows the normalized training objective and
reconstruction error as a function of the dimensionality $d$ of the
factorization,  for different initialization and step-size policies,
and three different planted $X^*$.
\begin{figure}[H]
\centering
\noindent
    \includegraphics[width=\textwidth]{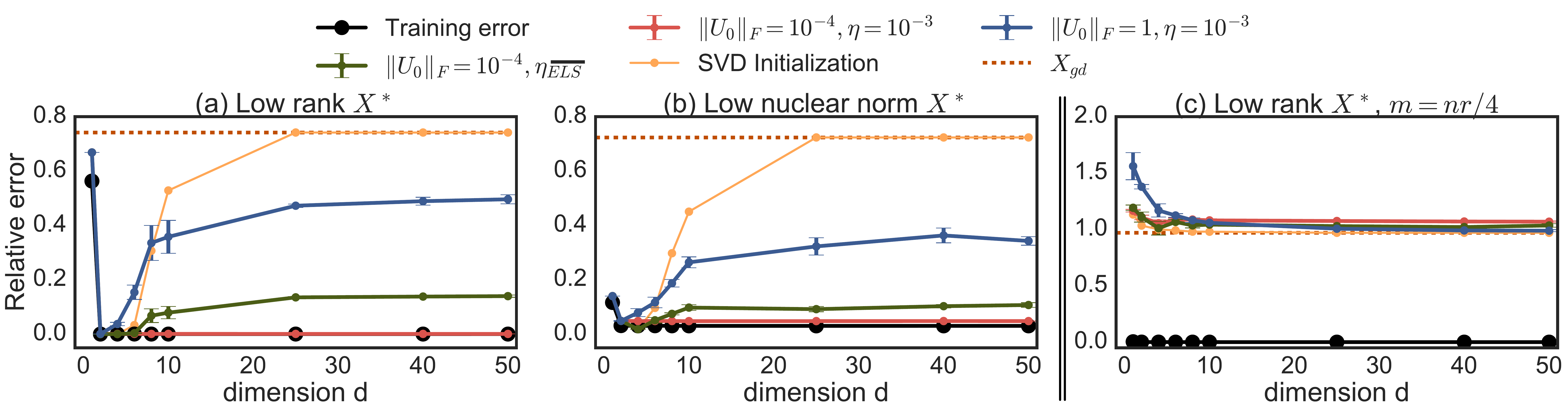}
   {\small \caption{ Reconstruction error of the solutions for the planted
      $50 \times 50$ matrix reconstruction problem.  In $(a)$ $X^*$ is of rank $r=2$ and $m=3nr$,  in $(b)$
      $X^*$ has a spectrum decaying as $O(1/k^{1.5})$ normalized to have $\|X^*\|_*=\sqrt{r}\norm{X^*}_F$ for $r=2$ and $m=3nr$, and in $(c)$ we look at a non-reconstructable setting where the number of measurements $m=nr/4$ is much smaller than the requirement to reconstruct a rank $r=2$  matrix. The plots compare the reconstruction error of gradient descent on $U$ for different choices initialization $U_0$ and step size  $\eta$, including fixed step-size and exact line search clipped for stability  ($\eta_{\overline{ELS}}$). Additonally,  the orange dashed reference line represents the performance of $X_{gd}$ --- a rank
      unconstrained global optima obtained by projected gradient descent on $X$ space for \eqref{eq:lstsq}, and `SVD-Initialization' is  an example of an alternate  rank $d$ global optima, where  initialization $U_0$ is picked based on SVD of $X_{gd}$ and gradient descent  with small stepsize is run on factor space. The results are averaged across $3$ random initialization and (nearly zero) errorbars indicate the standard deviation. 
        } 
    \label{fig:test_err_gauss}}
\end{figure} 

First, we see that (for sufficiently large $d$) gradient descent 
indeed finds a global optimum, as evidenced by the training
error (the optimization objective) being zero.  
This is not surprising since
with large enough $d$ this non-convex problem has no spurious local
minima \citep{burer2003nonlinear,journee2010low}  and
gradient descent converges almost surely to a global optima \citep{lee2016gradient}; there has also been recent work
establishing conditions for global convergence for low $d$ \citep{bhojanapalli2016global,ge2016matrix}.

The more surprising observation is that in panels $(a)$ and $(b)$, even when $d>m/n$, indeed even
for $d=n$, we still get good 
reconstructions from the solution of gradient descent with  initialization 
$U_0$ close to zero and small step size.  In this regime, \eqref{eq:lstsq_u} is
underdetermined and minimizing it does not ensure
generalization. To emphasize this, we plot the reference behavior
  of a rank unconstrained global minimizer $X_{gd}$ obtained via projected gradient
  descent for \eqref{eq:lstsq} on the $X$ space. For $d<n$ we also plot an example of an alternate ``bad" 
        rank $d$ global optima obtained with an initialization based on SVD of $X_{gd}$ (`{SVD Initialization}').
        
         When $d<m/n$, we understand how
the low-rank structure can guarantee generalization
\citep{srebro2005generalization} and reconstruction
\citep{keshavan2012efficient,bhojanapalli2016global,ge2016matrix}.
What ensures generalization when $d\gg m/n$?  
  Is there a strong
implicit regularization at play for the case of gradient descent on factor space and initialization close to zero?

\begin{figure}[htb]
\centering
\noindent
\includegraphics[width=\textwidth]{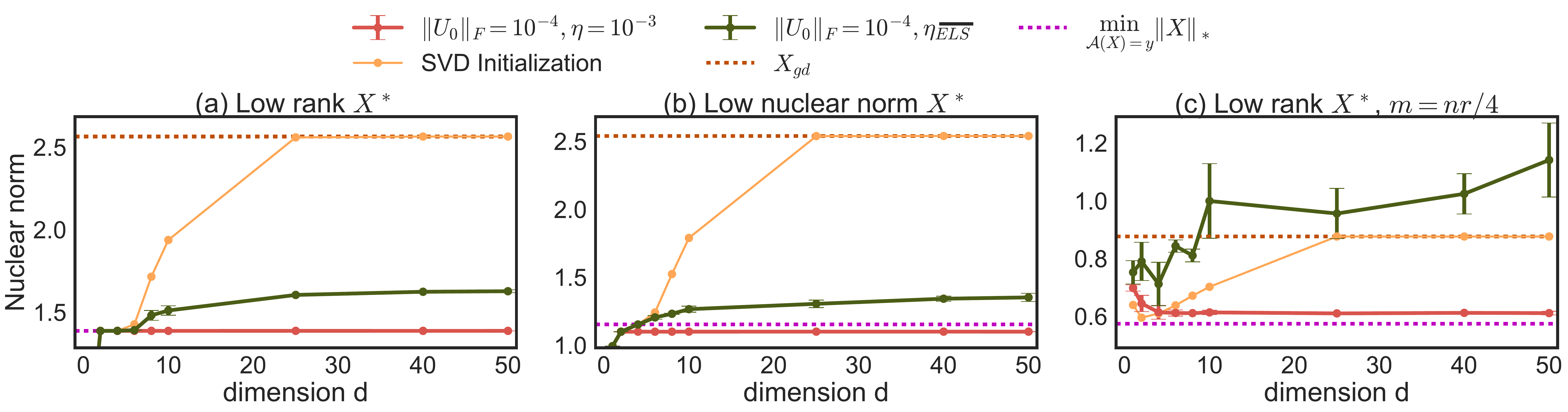}
{\small \caption{Nuclear norm of the solutions from Figure
  \ref{fig:test_err_gauss}. 
  In addition to the reference of $X_{gd}$ from Figure~\ref{fig:test_err_gauss}, the  magenta dashed line (almost overlapped by the plot of $\|U\|_F=10^{-4},\eta=10^{-3}$) is added as a reference for the (rank unconstrained)  minimum nuclear norm global optima. 
  The error bars indicate the standard deviation across $3$ random initializations. We have dropped the plot for $\norm{U}_F=1,\eta=10^{-3}$ to reduce clutter. 
  }
\label{fig:NN_gauss}}
\end{figure} 

Observing the nuclear norm of the resulting solutions plotted in
Figure \ref{fig:NN_gauss} suggests that gradient descent implicitly
induces a low nuclear norm solution.  This is the case even for $d=n$
when the factorization imposes no explicit constraints. Furthermore,
we do not include any explicit regularization and optimization is run
to convergence without any early stopping.  In fact, we can 
see a clear bias toward low nuclear norm even in problems where
reconstruction is not possible: in panel (c) of Figure
\ref{fig:NN_gauss} the number of samples $m=nr/4$ is much smaller than those required to reconstruct a rank $r$ ground truth matrix $X^*$.
The optimization in \eqref{eq:lstsq_u} is highly underdetermined and there are many possible zero-error global minima, but  
gradient descent still prefers a
lower nuclear norm solution.  The emerging story is that gradient
descent biases us to a low nuclear norm solution, and we already know
how having low nuclear norm can ensure generalization
\citep{srebro2005rank,foygel2011concentration} and minimizing the
nuclear norm ensures reconstruction
\citep{recht2010guaranteed,candes2009exact}.

Can we more explicitly characterize this bias?  We see that we do
not always converge precisely to the minimum nuclear norm solution.
In particular, the choice of step size and initialization affects
which solution gradient descent converges to.
Nevertheless, as we formalize in Section \ref{sec:gf}, we argue that
when $U$ is full dimensional, the step size becomes small enough, and
the initialization approaches zero, gradient descent will converge
precisely to a minimum nuclear norm solution, i.e.~to
$\argmin_{X\succeq0} \norm{X}_*\text{ s.t. } \mathcal{A}(X)=y$.

\section{Gradient Flow and Main Conjecture} \label{sec:gf}
 
The behavior of gradient descent with infinitesimally small step size
is captured by the differential equation $\dot{U}_t := \dv{U_t}{t} =
-\nabla f(U_t)$ with an initial condition for $U_0$.  For the optimization in 
\eqref{eq:lstsq_u} this is
\begin{equation} \label{eq:gradflowU}
\dot{U}_t = -\mathcal{A}^*(\mathcal{A}(U_tU_t^\top)-y)U_t,
\end{equation}
where $\mathcal{A}^*:\b{R}^m\to\bR^{n\times n}$ is the adjoint of $\mathcal{A}$
and is given by $\mathcal{A}^*(r)=\sum_ir_iA_i$. Gradient descent can be
seen as a discretization of \eqref{eq:gradflowU}, and approaches
\eqref{eq:gradflowU} as the step size goes to zero.

The dynamics \eqref{eq:gradflowU} define the behavior of the solution
$X_t=U_t U_t^\top$ and using the chain rule we can verify that
$\dot{X}_t = \dot{U}_tU_t^\top + U_t\dot{U}_t^\top =
-\mathcal{A}^*(r_t)X_t - X_t\mathcal{A}^*(r_t)$, where $r_t =
\mathcal{A}(X_t)-y$ is a vector of  the residual.  That is, even though the dynamics
are defined in terms of specific factorization $X_t=U_t U_t^\top$,
they are actually independent of the factorization and can be
equivalently characterized as
\begin{equation} \label{eq:gradflowX}
\dot{X}_t = -\mathcal{A}^*(r_t)X_t - X_t\mathcal{A}^*(r_t).
\end{equation}
We can now define the limit point  $X_\infty(X_{\textrm{init}}) := \lim_{t\to\infty}
X_t$ for the factorized gradient flow  \eqref{eq:gradflowX} initialized
at $X_0=X_{\textrm{init}}$.  We emphasize that these dynamics are very different from
the standard gradient flow dynamics of \eqref{eq:lstsq} on $X$, corresponding to gradient
descent on $X$, which take the form $\dot{X}_t = -\nabla F(X_t) =
-\mathcal{A}^*(r_t)$. 

Based on the preliminary experiments in Section~\ref{sec:gd} and a
more comprehensive numerical study discussed in Section~\ref{sec:exp},
we state our main conjecture as follows:
\begin{conjecture*}
  For any full rank $X_{\textrm{init}}$, if $\hat{X}=\lim_{\alpha\to 0}
  X_\infty(\alpha X_{\textrm{init}})$ exists and is a global optima for
  \eqref{eq:lstsq} with $\mathcal{A}(\hat{X})=y$, then $\hat{X} \in
  \argmin_{X\succeq 0}\ \norm{X}_*\ \textrm{s.t.}\
  \mathcal{A}(X) = y$.
\end{conjecture*}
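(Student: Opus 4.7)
The plan is to verify the KKT optimality conditions for the minimum nuclear norm problem by constructing an explicit dual certificate from the gradient flow trajectory. Since we restrict to $X\succeq 0$, the nuclear norm coincides with the trace, and the KKT conditions for $\min_{X\succeq 0}\operatorname{tr}(X)$ subject to $\mathcal{A}(X)=y$ state that $\hat X$ is optimal if and only if there exists $\nu^\star\in\mathbb{R}^m$ with $\mathcal{A}^*(\nu^\star)\preceq I$ and $(I-\mathcal{A}^*(\nu^\star))\hat X=0$, i.e., $\mathcal{A}^*(\nu^\star)$ restricts to the identity on the range of $\hat X$. Since feasibility $\mathcal{A}(\hat X)=y$ is part of the hypothesis, it suffices to exhibit such a $\nu^\star$ in the image of $\mathcal{A}^*$ extracted from the flow.

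The starting point is the structural observation that the solution of \eqref{eq:gradflowX} factors as $X_t = P(t)\,X_{\mathrm{init}}\,P(t)^\top$, where $P(t)$ is defined by $\dot P(t) = -\mathcal{A}^*(r_t)\,P(t)$ with $P(0)=I$; a direct chain-rule check against \eqref{eq:gradflowX} verifies this ansatz, using symmetry of the $A_i$. Setting $\nu(t):=-\int_0^t r_\tau\,d\tau\in\mathbb{R}^m$, the generator of $P(t)$ is $\mathcal{A}^*(\dot\nu(t))$, so $P(t)$ is the time-ordered exponential of $\mathcal{A}^*(\nu(t))$. In the restricted regime where the measurement matrices $A_i$ pairwise commute, $\mathcal{A}^*(r_\tau)$ commutes with itself at all times, time ordering collapses, and one obtains the clean formula $X_t=\exp(\mathcal{A}^*(\nu(t)))\,X_{\mathrm{init}}\,\exp(\mathcal{A}^*(\nu(t)))$. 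Heuristically one expects this simple-exponential form to re-emerge in the $\alpha\to 0$ regime even without commutativity, because $X_t$ stays close to the (tiny) initialization for an anomalously long time during which the direction of the forcing $\mathcal{A}^*(r_t)$ stabilizes.

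Granting this exponential representation, the limit analysis is clean. Writing $\hat X_\alpha = \exp(\mathcal{A}^*(\nu_\alpha))\,(\alpha X_{\mathrm{init}})\,\exp(\mathcal{A}^*(\nu_\alpha))$ where $\nu_\alpha$ is the terminal value of $\nu(t)$ along the flow initialized at $\alpha X_{\mathrm{init}}$, the assumption that $\hat X:=\lim_{\alpha\to 0}\hat X_\alpha$ exists and is nonzero forces the eigenvalues of $\mathcal{A}^*(\nu_\alpha)$ to diverge like $\tfrac{1}{2}\log(1/\alpha)$ along the eventual range of $\hat X$, and to remain strictly smaller along its orthogonal complement. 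Normalizing $\mu_\alpha:=\nu_\alpha/\log(1/\alpha)$ and passing to a subsequential limit $\nu^\star$, one concludes that $\mathcal{A}^*(\nu^\star)$ acts as the identity on $\operatorname{range}(\hat X)$ and has remaining eigenvalues $\le 1$; this is exactly the KKT dual witness and proves the minimum-trace (equivalently, minimum nuclear norm) optimality of $\hat X$.

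The main obstacle is the middle step. Outside the commuting regime $P(t)$ is a genuine time-ordered product, and in principle the non-commutation of $\mathcal{A}^*(r_\tau)$ across different $\tau$ could rotate the dominant eigenspace of $P(t)$ away from that of $\exp(\mathcal{A}^*(\nu(t)))$, invalidating the clean representation used above. Justifying that these rotations vanish in the $\alpha\to 0$ limit — presumably via a quasi-static or adiabatic argument on the long logarithmic time window during which $X_t$ is still small and the effective forcing direction evolves slowly — is where the real technical work lies, and is plausibly why the general statement is posed as a conjecture and established only in restricted settings.
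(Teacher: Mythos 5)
You are attempting to prove what the paper itself states only as a \emph{conjecture}, and your proposal in fact contains the same division of labor: the part you actually prove is the commutative special case, and it coincides almost verbatim with the paper's Theorem~\ref{thm:commutative} (KKT/dual-certificate reduction for $\min_{X\succeq 0}\operatorname{tr}(X)$, the representation $X_t=P(t)X_{\mathrm{init}}P(t)^\top$ with $P$ a time-ordered exponential, collapse of the time ordering when the $A_i$ commute, and the dual witness obtained by normalizing $\nu_\alpha$ by $\log(1/\alpha)$, which is the paper's $\nu(\beta)=s_\infty(\beta)/\beta$ with $\beta=-\log\alpha$). The step you label the ``main obstacle'' is not a deferrable technicality: it is the entire open content of the conjecture. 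The paper explains why your proposed ``quasi-static/adiabatic'' justification cannot be a routine patch: when the $A_i$ do not commute, infinitesimal steps of the flow can compound along commutator directions $[A_i,A_j]$, and the Lie algebra generated by $\{A_i\}$ is generically all of $\mathbb{R}^{n\times n}$ already for $m=2$; hence an arbitrary control $r_t$ can steer $X_t$ to essentially any p.s.d.\ matrix, and no argument that (like yours, and like the commutative proof) uses only membership of the trajectory in an exponential-image manifold can succeed. Any proof must exploit the specific structure and smoothness of the actual residuals $r_t=\mathcal{A}(X_t)-y$ to show the non-commutative corrections are asymptotically lower order as $\alpha\to 0$; that argument is exactly what is missing from your sketch, and it is why the general statement remains a conjecture.

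Two further gaps inside the part you do argue. First, your eigenvalue analysis is stated for arbitrary full-rank $X_{\mathrm{init}}$, but the identity behind ``eigenvalues of $\mathcal{A}^*(\nu_\alpha)$ diverge like $\tfrac12\log(1/\alpha)$ on $\operatorname{range}(\hat X)$'' requires $X_{\mathrm{init}}$ to be simultaneously diagonalizable with the $A_i$; this is why the paper's theorem initializes at $\alpha I$. For a general $X_{\mathrm{init}}$ the spectrum of $\exp(\mathcal{A}^*(\nu_\alpha))X_{\mathrm{init}}\exp(\mathcal{A}^*(\nu_\alpha))$ is not read off coordinatewise, so even the commutative case needs an extra argument under your hypotheses. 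Second, extracting $\nu^\star$ as a subsequential limit of $\mu_\alpha=\nu_\alpha/\log(1/\alpha)$ presupposes boundedness of $\mu_\alpha$, which does not follow from what you establish: along the kernel of $\hat X$ the eigenvalues of $\mathcal{A}^*(\mu_\alpha)$ are only bounded above (by roughly $1$) and may drift to $-\infty$, in which case no limit point exists. This is repairable without compactness, e.g.\ by a weak-duality passage to the limit: once $\mathcal{A}^*(\mu_\alpha)\preceq (1+o(1))I$ and $\mathcal{A}^*(\mu_\alpha)\hat X\to\hat X$, every feasible $X\succeq 0$ satisfies $\operatorname{tr}(X)\geq (1-o(1))\,\mu_\alpha^\top y$ and $\mu_\alpha^\top y\to\operatorname{tr}(\hat X)$, so $\hat X$ is optimal; but as written your certificate-extraction step is unjustified.
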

Requiring a full-rank initial point demands a full dimensional
$d=n$ factorization in \eqref{eq:lstsq_u}.  The assumption of global
optimality in the conjecture is generally satisfied: for almost all
initializations, gradient flow will converge to a local minimizer
\citep{lee2016gradient}, and when $d=n$ any such local minimizer is also
global minimum \citep{journee2010low}.  Since we are primarily concerned with
underdetermined problems, we expect the global optimum to achieve
zero error, i.e.~satisfy $\mathcal{A}(X)=y$.  We already know from these
existing literature that gradient descent (or gradient flow) will
generally converge to \emph{a} solution satisfying $\mathcal{A}(X)=y$; the question we address
here is \emph{which} of those solutions will it converge to.  


The conjecture implies the same behavior for asymmetric
problems factorized as $X = UV^\top$ with gradient flow on $(U,V)$,
since this is equivalent to gradient flow on the 
p.s.d. factorization of $\left[\begin{smallmatrix}W&X\\X^\top&Z\end{smallmatrix}\right]$.

\section{Theoretical Analysis}\label{sec:theory}

We will prove our conjecture for the special case where the matrices
$A_i$ commute, and discuss the more challenging non-commutative case.
But first, let us begin by reviewing the behavior of
straight-forward gradient descent on $X$ for the convex problem in \eqref{eq:lstsq}.

\paragraph{Warm up: } Consider gradient descent updates on the
original problem \eqref{eq:lstsq} in $X$ space, ignoring the
p.s.d.~constraint.
The gradient direction $\nabla
F(X) = \mathcal{A}^*(\mathcal{A}(X)-y)$ is always spanned by the $m$ 
matrices $A_i$.  Initializing at $X_{\textrm{init}}=0$, we will therefore always
remain in the $m$-dimensional subspace $\mathcal{L} = \left\{ X=\mathcal{A}^*(s)
  \middle| s\in\bR^m \right\}$.  Now consider the optimization problem
$\min_X \norm{X}^2_F\ \text{s.t.}\ \mathcal{A}(X)=y$.  The KKT optimality
conditions for this problem are $\mathcal{A}(X)=y$  and $\exists \nu$ s.t. 
$X=\mathcal{A}^*(\nu)$.  As long as we are in $\mathcal{L}$, the second condition
is satisfied, and if we converge to a zero-error global minimum, then
the first condition is also satisfied.  Since gradient descent stays
on this manifold, this establishes that if gradient descent converges
to a zero-error solution, it is the minimum Frobenius norm solution.

\paragraph{Getting started: $\mathbf{m=1}$} Consider the simplest case of
the factorized problem when $m=1$ with $A_1=A$ and $y_1=y$. The
dynamics of \eqref{eq:gradflowX} are given by $\dot{X}_t =- r_t (A X_t
+ X_tA)$, where $r_t$ is simply a scalar, and the solution for $X_t$
is given by, $X_t  = \exp\left( s_t A \right)X_0\exp\left( s_t A \right)$
where $s_T = -\int_0^T r_tdt$. Assuming  $\hat{X} =
\lim_{\alpha\to0}X_\infty (\alpha X_0)$ exists and $\mathcal{A}(\hat{X}) = y$,
 we want to show $\hat{X}$ is an optimum for the
following problem
\begin{equation}\label{eq:minNN}
\min_{X\succeq 0} \norm{X}_*\ \ \textrm{s.t.}\ \ \mathcal{A}(X) = y.
\end{equation}
The KKT optimality conditions for \eqref{eq:minNN} are:
\begin{equation} \label{eq:KKT} 
\exists \nu \in \mathbb{R}^m\ \textrm{s.t.}
\qquad \mathcal{A}(X) = y 
\qquad X \succeq 0
\qquad \mathcal{A}^*(\nu) \preceq I
\qquad (I - \mathcal{A}^*(\nu))X = 0
\end{equation}
We already know that the first condition holds, and the p.s.d.~condition is 
guaranteed by the factorization of $X$. The remaining complementary slackness and dual feasibility  
conditions effectively require that $\hat{X}$ is spanned
by the top eigenvector(s) of $A$. Informally, looking to the gradient flow path
above, for any non-zero $y$, as $\alpha\to 0$ it is necessary that
$|s_\infty|\to \infty$ in order to converge to a global optima, thus
eigenvectors corresponding to the top eigenvalues of $A$ will dominate
the span of $X_\infty(\alpha X_{\textrm{init}})$.

\paragraph{What we can prove: Commutative $\mathbf{\set{A_i}_{i\in[m]}}$}
The characterization of the the gradient flow path from the previous
section can be extended to arbitrary $m$ in the case that the matrices
$A_i$ commute, i.e.~$A_iA_j = A_jA_i$ for all $i,j$. Defining
$s_T = -\int_0^T r_tdt$ -- a vector integral, we can verify by
differentiating that solution of  \eqref{eq:gradflowX} is 
\begin{equation} \label{eq:commutativepath}
X_t = \exp\left( \mathcal{A}^*(s_t)\right)X_0\exp\left( \mathcal{A}^*(s_t) \right)
\end{equation}


\begin{theorem}\label{thm:commutative}
In the case where matrices $\set{A_i}_{i=1}^m$ commute,  if  $\hat{X} = \lim_{\alpha\to 0}X_\infty(\alpha I)$ exists and is a global optimum for \eqref{eq:lstsq} with $\mathcal{A}(\hat{X}) = y$,  then $\hat{X} \in \argmin_{X\succeq0} \norm{X}_*\ \textrm{s.t.}\ \mathcal{A}(X) = y$.
\end{theorem}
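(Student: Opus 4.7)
The plan is to use the explicit commutative path formula \eqref{eq:commutativepath} to manufacture an explicit dual certificate witnessing the KKT conditions \eqref{eq:KKT} for the nuclear-norm minimization problem \eqref{eq:minNN}.

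First, I would specialize the path formula to $X_0 = \alpha I$. Because $\alpha I$ commutes with every $\mathcal{A}^*(s_t)$, the two exponential factors in \eqref{eq:commutativepath} merge and give $X_t = \alpha\exp\!\bigl(2\mathcal{A}^*(s_t)\bigr)$, which is strictly positive definite at every finite $t$ and is diagonal in the common eigenbasis $v_1,\dots,v_n$ of the commuting family $\{A_i\}$. Passing to the limit $t\to\infty$ (justified by the hypothesis), and using injectivity of $\mathcal{A}^*$ (a consequence of the linear independence of the $A_i$), I would extract a unique $s_\infty(\alpha)\in\mathbb{R}^m$ with
$$\hat{X}_\alpha := X_\infty(\alpha I) = \alpha\exp\!\bigl(2\mathcal{A}^*(s_\infty(\alpha))\bigr),$$
so that in the joint eigenbasis $\tfrac12\log\bigl(\lambda_i(\hat{X}_\alpha)/\alpha\bigr) = v_i^\top\mathcal{A}^*(s_\infty(\alpha))v_i$.

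Next I would propose the candidate dual $\nu(\alpha) := 2 s_\infty(\alpha)/\log(1/\alpha)$, so that the $i$th joint eigenvalue of $\mathcal{A}^*(\nu(\alpha))$ is
$$\mu_i(\alpha) = 1 + \frac{\log\lambda_i(\hat{X}_\alpha)}{\log(1/\alpha)}.$$
Letting $\alpha \to 0$ and using $\hat{X}_\alpha \to \hat{X}$, the spectral behavior is read off directly: on indices $i$ with $\lambda_i(\hat{X})>0$, the numerator stays bounded while the denominator diverges to $+\infty$, so $\mu_i(\alpha)\to 1$; on indices with $\lambda_i(\hat{X})=0$, one has $\log\lambda_i(\hat{X}_\alpha)<0$ eventually, so $\mu_i(\alpha)\le 1$. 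After extracting a convergent subsequence I obtain a limit $M := \lim\mathcal{A}^*(\nu(\alpha))$ satisfying $M\preceq I$ and $Mv_i=v_i$ for every $v_i$ in the support of $\hat{X}$, i.e.\ $(I-M)\hat{X}=0$. Injectivity of $\mathcal{A}^*$ then produces a $\nu\in\mathbb{R}^m$ with $\mathcal{A}^*(\nu)=M$; together with the given $\mathcal{A}(\hat{X})=y$ and $\hat{X}\succeq 0$, this verifies every clause of \eqref{eq:KKT} and establishes optimality of $\hat{X}$ for \eqref{eq:minNN}.

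The step I expect to be the main obstacle is the compactness required to extract $M$. On support directions the $\mu_i(\alpha)$ are automatically bounded and converge to $1$, but on the kernel of $\hat{X}$ they could in principle diverge to $-\infty$ if $\lambda_i(\hat{X}_\alpha)$ decays faster than any polynomial in $\alpha$, and then $\nu(\alpha)$ itself is unbounded. One way around this is to observe that only the restriction of $M$ to $\mathrm{supp}(\hat{X})$ is pinned down by complementary slackness: first solve the finite linear system corresponding to $\langle\nu,\beta^{(i)}\rangle=1$ for each $i$ in the support (where $\beta^{(i)}\in\mathbb{R}^m$ collects the joint eigenvalues of the $A_j$ at $v_i$), and then verify the remaining inequalities $\langle\nu,\beta^{(j)}\rangle\le 1$ by a limiting argument on a truncated version of $\nu(\alpha)$. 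A related subtlety is ensuring that $s_\infty(\alpha)$ is a finite vector for each small $\alpha$, i.e.\ that $\hat{X}_\alpha$ remains strictly positive definite; this can be finessed by working with the finite-$t$ iterates $X_{t,\alpha}$ along a diagonal sequence in which $t\to\infty$ and $\alpha\to 0$ simultaneously, so that the path formula applies throughout and the same eigenvalue bookkeeping goes through.
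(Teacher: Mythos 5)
Your proposal is essentially the paper's own proof: you use the commutative path formula with $X_0=\alpha I$, work in the joint eigenbasis of the commuting $A_i$, define the dual candidate as $s_\infty$ rescaled by $\log(1/\alpha)$ (the paper's $\nu(\beta)=s_\infty(\beta)/\beta$ with $\beta=-\log\alpha$, differing only by an immaterial factor of $2$), and verify complementary slackness and dual feasibility by showing the joint eigenvalues of $\mathcal{A}^*(\nu)$ tend to $1$ on the support of $\hat{X}$ and are eventually below $1$ on its kernel. The two subtleties you flag (possible unboundedness of $\nu(\alpha)$ along kernel directions and finiteness of $s_\infty(\alpha)$) are genuine but are also glossed over in the paper's argument, so your treatment is at least as careful as the original.
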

\begin{proof}
It suffices to show that such a $\hat{X}$ satisfies the complementary slackness and dual feasibility KKT conditions in \eqref{eq:KKT}.
Since the matrices $A_i$ commute and are symmetric, they are simultaneously diagonalizable by a basis $v_1,..,v_n$, and so is $\mathcal{A}^*(s)$ for any $s \in \mathbb{R}^m$. This implies that for any $
\alpha$, $X_\infty(\alpha I)$ given by \eqref{eq:commutativepath} and its limit $\hat{X}$ also have the same eigenbasis. Furthermore, since $X_\infty(\alpha I)$ converges to $\hat{X}$, the scalars $v_k^\top X_\infty(\alpha I) v_k \to v_k^\top \hat{X} v_k$ for each $k \in [n]$. Therefore, $\lambda_k(X_\infty(\alpha I)) \to \lambda_k(\hat{X})$, where $\lambda_k(\cdot)$ is defined as the eigenvalue corresponding to eigenvector $v_k$ and \emph{not} necessarily the $k^{\textrm{th}}$ largest eigenvalue.

Let $\beta = -\log \alpha$, then $\lambda_k(X_\infty(\alpha I)) = \exp(2\lambda_k(\mathcal{A}^*(s_\infty(\beta))) - 2\beta)$. For all $k$ such that $\lambda_k(\hat{X}) > 0$, by the continuity of $\log$, we have 
\vspace{-10pt}
\begin{equation}
2\lambda_k(\mathcal{A}^*(s_\infty(\beta))) - 2\beta - \log \lambda_k(\hat{X}) \to 0 \implies \lambda_k\Big(\mathcal{A}^*\big(\frac{s_\infty(\beta)}{\beta}\big)\Big) - 1 - \frac{\log \lambda_k(\hat{X})}{2\beta} \to 0.
\end{equation}
Defining $\nu(\beta) = \nicefrac{s_\infty(\beta)}{\beta}$, we conclude that for all $k$ such that $\lambda_k(\hat{X})\neq0$,  $\lim_{\beta\to\infty}\lambda_k(\mathcal{A}^*(\nu(\beta))) = 1$.
Similarly, for each $k$ such that $\lambda_k(\hat{X}) = 0$,
\begin{equation}
\exp(2\lambda_k(\mathcal{A}^*(s_\infty(\beta))) - 2\beta) \to 0 \implies \exp(\lambda_k(\mathcal{A}^*(\nu(\beta))) - 1)^{2\beta} \to 0.
\end{equation}
Thus, for every $\epsilon\in(0,1]$, for sufficiently large $\beta$
\begin{equation}
\exp(\lambda_k(\mathcal{A}^*(\nu(\beta))) - 1) < \epsilon^{\frac{1}{2\beta}} < 1 \implies \lambda_k(\mathcal{A}^*(\nu(\beta))) < 1.
\end{equation}
Therefore, we have shown that $\lim_{\beta\to\infty} \mathcal{A}^*(\nu(\beta)) \preceq I$ and $\lim_{\beta\to\infty} \mathcal{A}^*(\nu(\beta))\hat{X} = \hat{X}$ establishing the optimality of $\hat{X}$ for \eqref{eq:minNN}.
\end{proof}
Interestingly, and similarly to gradient descent on $X$, this proof does
not exploit the particular form of the ``control" $r_t$ and only
relies on the fact that the gradient flow path stays within the
manifold
\begin{equation}\label{eq:commutativemanifold}
 \mathcal{M} = \set{X = \exp\left( \mathcal{A}^*(s)
  \right)X_{\textrm{init}}\exp\left( \mathcal{A}^*(s) \right)\ \middle|\ s \in
  \mathbb{R}^m}.
\end{equation} 
Since the $A_i$'s commute, we can verify that the
tangent space of $\mathcal{M}$ at a point $X$ is given by $T_X\mathcal{M} =
\textrm{Span}\set{A_iX + XA_i}_{i\in[m]}$, thus gradient flow will
always remain in $\mathcal{M}$. 
For any control $r_t$ such that following
$\dot{X}_t = -\mathcal{A}^*(r_t)X_t - X_t\mathcal{A}^*(r_t)$ leads to a zero error
global optimum, that optimum will be a minimum nuclear norm solution.
This implies in particular that the conjecture extends to gradient
flow on \eqref{eq:lstsq_u} even when the Euclidean norm is replaced by
certain other norms, or when only a subset of measurements are used
for each step (such as in stochastic gradient descent).

However, unlike gradient descent on $X$, the manifold $\mathcal{M}$ is not
flat, and the tangent space at each point is different.  Taking
finite length steps, as in gradient descent, would cause us to ``fall off" of the
manifold.  To avoid this, we must take
infinitesimal steps, as in the gradient flow dynamics.

In the case that $X_{\textrm{init}}$ and the measurements $A_i$ are diagonal
matrices, gradient descent on \eqref{eq:lstsq_u} is equivalent to a
vector least squares problem, parametrized in terms of the square root
of entries:
\begin{corollary}
  Let $x_\infty(x_{\textrm{init}})$ be the limit point of gradient flow on
  $\min_{u\in\mathbb{R}^n} \norm{A x(u)- y}_2^2$ with initialization $x_{\textrm{init}}$, where
  $x(u)_i=u_i^2$, $A\in\bR^{m \times n}$ and $y\in\bR^m$. If $\hat{x} = \lim_{\alpha \to 0} x_\infty(\alpha\vec{1})$
  exists and $A\hat{x} = y$, then $\hat{x}
  \in \argmin_{x\in\bR^m_+} \norm{x}_1\ \textrm{s.t.}\ Ax = y$.
\end{corollary}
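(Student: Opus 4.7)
The plan is to realize the vector problem as an instance of the matrix problem \eqref{eq:lstsq_u} in which both the measurements and the iterates are simultaneously diagonal, and then invoke Theorem~\ref{thm:commutative} directly.

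First I would lift everything to the diagonal-matrix setting. Let $a_i^\top$ denote the $i^{\text{th}}$ row of $A$ and set $A_i := \text{diag}(a_i)$, so that $\mathcal{A}^*(r) = \text{diag}(A^\top r)$ and, for any diagonal $X = \text{diag}(x)$, $\langle A_i, X\rangle = a_i^\top x$, i.e. $\mathcal{A}(\text{diag}(x)) = Ax$. Taking the factor $U = \text{diag}(u)$ gives $UU^\top = \text{diag}(u^2) = \text{diag}(x(u))$, and so the vector objective $\|Ax(u) - y\|_2^2$ literally coincides with the matrix objective $\|\mathcal{A}(UU^\top) - y\|_2^2$ from \eqref{eq:lstsq_u}. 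Since every $A_i$ is diagonal, so is $\mathcal{A}^*(r)$ for every residual $r$, hence the flow $\dot{U}_t = -\mathcal{A}^*(r_t)U_t$ preserves diagonal structure. A direct chain-rule check then shows that if $U_t = \text{diag}(u_t)$, the entries $u_{t,i}$ satisfy exactly the gradient flow equation one would write for the vector problem in $u$-coordinates. Initializing the matrix flow at $X_0 = \alpha I$, which corresponds to $U_0 = \sqrt{\alpha}\, I$ and $x_0 = \alpha \vec{1}$, this identifies $X_\infty(\alpha I) = \text{diag}(x_\infty(\alpha\vec{1}))$ and, taking $\alpha \to 0$, $\hat{X} := \lim_{\alpha\to 0} X_\infty(\alpha I) = \text{diag}(\hat{x})$, with $\mathcal{A}(\hat{X}) = y$ by hypothesis.

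Since the $A_i$ are simultaneously diagonal they commute, so Theorem~\ref{thm:commutative} applies and $\hat{X}$ is a minimizer of $\|X\|_*$ subject to $X \succeq 0$ and $\mathcal{A}(X) = y$. To translate the conclusion back to vectors, note that for any diagonal PSD matrix $X = \text{diag}(x)$ with $x \geq 0$, $\|X\|_* = \text{tr}(X) = \|x\|_1$; and any feasible $x \geq 0$ with $Ax = y$ lifts to a feasible PSD $\text{diag}(x)$ of matching nuclear norm. Hence for every such $x$,
\begin{equation*}
\|\hat{x}\|_1 = \|\hat{X}\|_* \;\leq\; \|\text{diag}(x)\|_* = \|x\|_1,
\end{equation*}
which is the claimed $\ell_1$-optimality of $\hat{x}$.

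There is no real technical obstacle; the proof is essentially a reduction and Theorem~\ref{thm:commutative} does all the work. The only points I would be careful about are (i) verifying that the gradient flow truly preserves the diagonal ansatz so that the matrix and vector dynamics can be identified, and (ii) the final chain of inequalities that converts minimum nuclear norm over all feasible PSD matrices into minimum $\ell_1$ norm over feasible non-negative vectors by exhibiting a diagonal minimizer.
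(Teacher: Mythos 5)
Your reduction is exactly the argument the paper intends: the corollary is stated as the diagonal specialization of Theorem~\ref{thm:commutative}, where diagonal $A_i=\mathrm{diag}(a_i)$ and $U=\mathrm{diag}(u)$ make the matrix flow coincide with the vector flow, and the nuclear norm of a diagonal p.s.d.\ matrix becomes the $\ell_1$ norm of its nonnegative diagonal. Your write-up is correct and essentially identical to the paper's (implicit) proof, just spelled out in more detail.
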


\paragraph{The plot thickens: Non-commutative $\mathbf{\set{A_i}_{i\in[m]}}$}
Unfortunately, in the case that the matrices $A_i$ do not commute,
analysis is much more difficult. For a matrix-valued function $F$,
$\dv{}{t}\exp(F_t)$ is  equal to $\dot{F_t}\exp(F_t)$ only when
$\dot{F_t}$ and $F_t$ commute. Therefore, \eqref{eq:commutativepath} is no longer a valid
solution for \eqref{eq:gradflowX}.  Discretizing the solution path, we can
express the solution as the ``time ordered exponential":\vspace{-5pt}
\begin{equation} \label{eq:timeorderedexponential} X_t =
  \lim_{\epsilon\to0} \left(\prod_{\tau=t/\epsilon}^1 \exp\left(-
    \epsilon\mathcal{A}^*(r_{\tau\epsilon}) \right)
  \right)X_0\left(\prod_{\tau=1}^{t/\epsilon} \exp\left(-
    \epsilon\mathcal{A}^*(r_{\tau\epsilon}) \right)\right),\vspace{-5pt}
\end{equation}
where the order in the products is important.  If  $A_i$
commute, the product of exponentials is equal to an exponential of
sums, which in the limit evaluates to  
  the solution in \eqref{eq:commutativepath}. However, since in general $\exp(A_1)\exp(A_2) \neq
\exp(A_1+A_2)$,
the path \eqref{eq:timeorderedexponential} is \emph{not} contained in the
manifold $\mathcal{M}$ defined in \eqref{eq:commutativemanifold}.

It is tempting to try to construct a new manifold $\mathcal{M}'$ such that
$\textrm{Span}\set{A_iX + XA_i}_{i\in[m]}
\subseteq T_X\mathcal{M}'$ and $X_0 \in \mathcal{M}'$, ensuring the gradient flow remains in
$\mathcal{M}'$. However, since  $A_i$'s do not commute, by
combining infinitesimal steps along different directions, it is
possible to move (very slowly) in directions that are \emph{not} of
the form $\mathcal{A}^*(s)X + X\mathcal{A}^*(s)$ for any $s \in \mathbb{R}^m$. The
possible directions of movements indeed corresponds to the Lie algebra defined
by the closure of $\set{A_i}_{i=1}^m$ under the commutator operator $[A_i,A_j]
:= A_iA_j - A_jA_i$. Even when $m=2$, this closure will generally
encompass \emph{all} of $\Rmn$, allowing us to approach any
p.s.d.~matrix $X$ with some (wild) control $r_t$. Thus, we
cannot hope to ensure the KKT conditions for an arbitrary control as we
did in the commutative case --- it is necessary to exploit the structure
of the residuals $\mathcal{A}(X_t) - y$ in some way.

Nevertheless, in order to make finite progress moving along a commutator direction like $[A_i,A_j]X_t +
X_t[A_i,A_j]^\top$, it is necessary to use an extremely non-smooth
control, e.g., looping $1/\epsilon^2$ times between $\epsilon$ steps in the
directions $A_i,A_j,-A_i,-A_j$, each such loop making an $\epsilon^2$
step in the desired direction.  We expect the actual residuals $r_t$ to
behave much more smoothly and that for smooth control the
non-commutative terms in the expansion of the time ordered exponential
\eqref{eq:timeorderedexponential} are asymptotically lower order then the direct term
$\mathcal{A}^*(s)$ (as $X_{\textrm{init}} \rightarrow 0$).  This is indeed confirmed
numerically, both for the actual residual controls of the gradient
flow path, and for other random controls.


\section{Empirical Evidence} \label{sec:exp} 
Beyond the matrix reconstruction experiments of Section \ref{sec:gd},
we also conducted  experiments with similarly simulated matrix completion problems,
including problems where entries are sampled from power-law distributions (thus not satisfying incoherence), 
as well as matrix completion problem on non-simulated Movielens data. In addition to gradient descent, we also looked
more directly at the gradient flow ODE \eqref{eq:gradflowU} and used a numerical  ODE solver provided as part of \texttt{SciPy} \citep{scipy}. But we still uses a finite (non-zero) initialization.  We also emulated staying 
on a valid ``steering path" by numerically approximating the
time ordered exponential of \ref{eq:timeorderedexponential} --- for a finite
discretization $\eta$,  instead of moving linearly
in the direction of the gradient $\nabla f(U)$ (like in gradient descent), we multiply $X_t$ on right and left by $ e^{-\eta \mathcal{A}^*(r_t)}$.
The results of these experiments are summarized in Figure~\ref{fig:matrix_completion}.

\renewcommand{\thesubfigure}{\roman{subfigure}}
\begin{figure}[H]
\centering
\subfloat[Gaussian random measurements. We report the nuclear norm of the gradient flow solutions from three different approximations to \eqref{eq:gradflowU} -- numerical ODE solver (\textit{ODE approx.}),  time ordered exponential specified in \eqref{eq:timeorderedexponential} (\textit{Time ordered exp.}) and standard gradient descent with small step size (\textit{Gradient descent}). The nuclear norm of  the  solution from gradient descent on $X$ space -- $X_{gd}$ and the minimum nuclear norm global minima are provided as references.  In $(a)$ $X^*$ is rank $r$ and $m=3nr$, in $(b)$ $X^*$ has a decaying spectrum with $\|X^*\|_*=\sqrt{r}\|X^*\|_F$ and $m=3nr$, and in $(c)$ $X^*$ is rank $r$ with $m=nr/4$, where $n=50$, $r=2$.]
{\begin{minipage}[c]{\textwidth}
\centering
\includegraphics[width=0.8\linewidth]{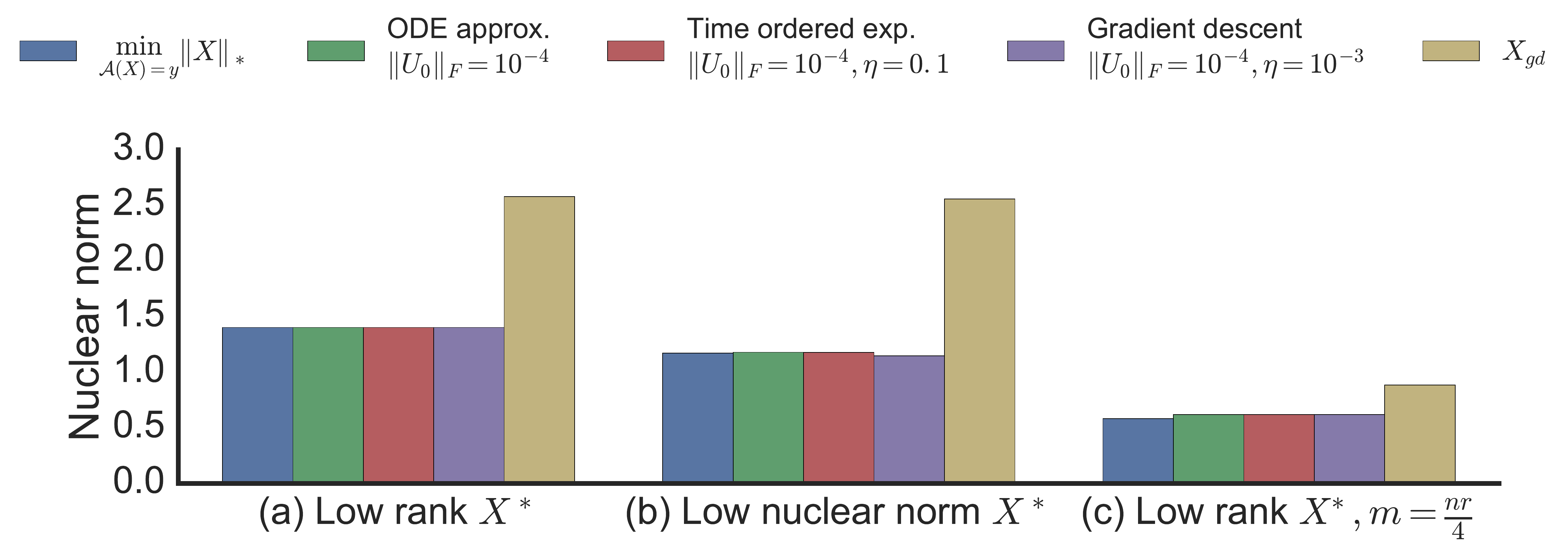}\label{fig:gauss_flow}
\end{minipage}}\\
\subfloat[Uniform matrix completion:  $\forall i$, $A_i$ measures a uniform random entry of $X^*$. Details on $X^*$,  number of measurements, and the legends follow  Figure\ref{fig:matrix_completion}-(i).]
{\begin{minipage}[c]{\textwidth}
\centering
\includegraphics[width=0.8\linewidth]{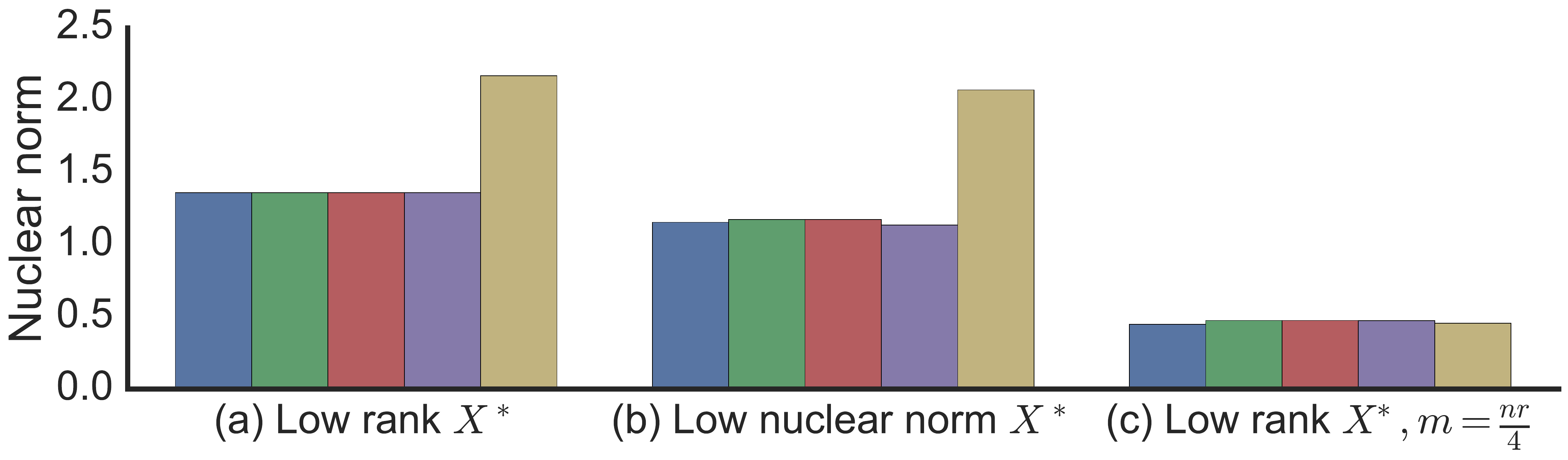}\label{fig:umc_flow}
\end{minipage}}\\
\subfloat[Power law matrix completion:  $\forall i$, $A_i$ measures a random entry of $X^*$  chosen according to a power law distribution. Details on $X^*$,  number of measurements, and the legends  follow  Figure\ref{fig:matrix_completion}-(i).]
{\begin{minipage}[c]{\textwidth}
\centering
\includegraphics[width=0.8\linewidth]{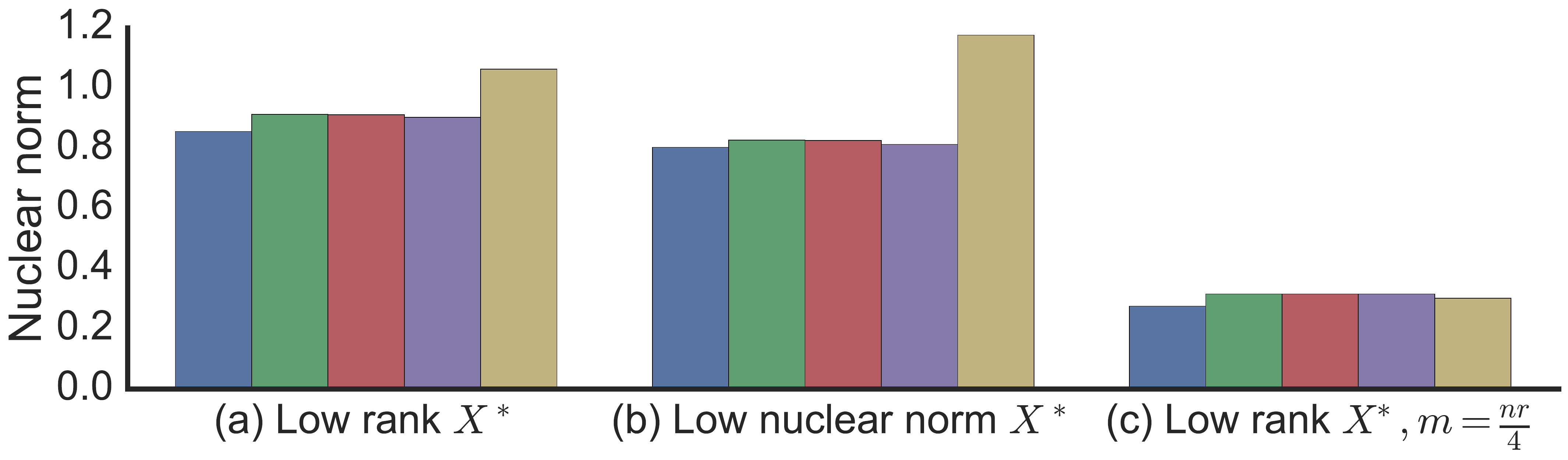}\label{fig:pmc_flow}
\end{minipage}}\\
\subfloat[Benchmark movie recommendation dataset --- Movielens $100$k. The dataset contains $\sim 100$k ratings from $n_1=943$ users on $n_2=1682$ movies. In this problem, gradient updates are performed on the asymmetric matrix factorization space $X=UV^\top$ with dimension $d=\min{(n_1,n_2)}$. 
The training data is completely fit to have $<\!10^{-2}$ error. Test error is computed on a held out data of $10$ ratings per user. Here we are not interested in the recommendation performance (test error) itself but on observing the bias of gradient flow with initialization close to zero to return a low nuclear norm solution --- the test error is provided merely to demonstrate the effectiveness of such a bias in this application. Also, due to the scale of the problem, we only report a coarse approximation of the gradient flow \ref{eq:gradflowU} from gradient descent with {$\|U_0\|_F=10^{-3}$, $\eta=10^{-2}$}.]
{\begin{minipage}[c]{\textwidth}
\centering
\begin{tabular}{|l|m{3.1cm}|m{3.1cm}|m{3.0cm}|}
\hline
            &  $\argmin_{\mathcal{A}(X)=y}\|X\|_*$ &  Gradient descent\newline  {\tiny$\|U_0\|_F=10^{-3}$, $\eta=10^{-2}$} &   $X_{gd}$ \\ \hline
Test Error   & $0.2880$ & $0.2631$ & $1.000$ \\ \hline
Nuclear norm & $8391$ & $8876$ & $20912$ \\ \hline
\end{tabular}
\label{tab:movielens}
\end{minipage}}
{\caption{Additional matrix reconstruction experiments}
\label{fig:matrix_completion}}
\end{figure}

In these experiments, we again observe trends similar to those in Section~\ref{sec:gd}.  In some panels in Figure~\ref{fig:matrix_completion}, we do see a discernible gap between the minimum nuclear norm global optima and the nuclear norm of the gradient flow solution with $\|U_0\|_F=10^{-4}$. This discrepancy could either be  due to starting at a non-limit point of $U_0$, or numerical issue arising from approximations to the ODE, or it could potentially suggest a weakening of the conjecture. Even if the later case were true, the experiments so far provide strong evidence for atleast approximate versions of our conjecture being true under a wide range of problems.

\paragraph*{Exhaustive search}
Finally, we also did experiments on an exhaustive grid search over small
problems, capturing essentially all possible problems of this size. We performed an exhaustive grid search for matrix completion problem
instances in symmetric p.s.d. ${3\times 3}$ matrices.  With $m=4$,
there are $15$ unique masks or $\{A_i\}_{i\in[4]}$'s that are valid
symmetric matrix completion observations. For each mask, we fill the
$m=4$ observations with all possible combinations of $10$ uniformly
spaced values in the interval $[-1,1]$. This gives us a total of
$15\times 10^4$ problem instances. Of these problems instances, we
discard the ones that do not have a valid PSD completion and run the
ODE solver on every
remaining instance with a random $U_0$ such that
$\|U_0\|_F=\bar{\alpha}$, for different values of $\bar{\alpha}$.
Results on the deviation from the minimum nuclear norm are reported in
Figure~\ref{fig:grid}.  For small $\bar{\alpha}=10^{-5}, 10^{-3}$,
most of instances of our grid search algorithm returned solutions with
near minimal nuclear norms, and the maximum deviation is within the
possibility of numerical error. This behavior also decays for
$\bar{\alpha}=1$.

\begin{figure}[H]
\centering
\includegraphics[width=\textwidth]{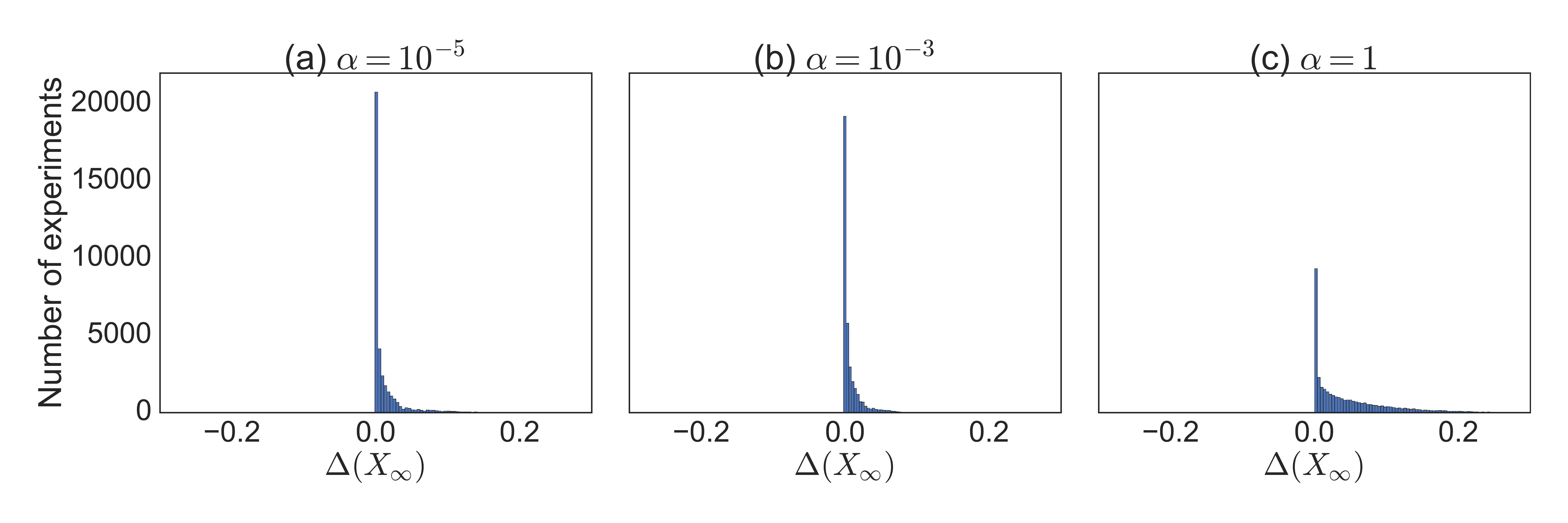}
{\small\caption{ Histogram of relative sub-optimality of nuclear norm of $X_\infty$ in grid search experiments. In this figure, we plot the histogram of $\Delta(X_\infty)=\frac{\norm{X_\infty}_*-\norm{X_\text{min}}_*}{\norm{X_\text{min}}_*}$, where $\norm{X_\text{min}}_*=\underset{\mathcal{A}(X)=y}{\min}\norm{X}_*$. The three panels correspond to different values of norm of initialization $\bar{\alpha}=\|U_0\|_F$. In  $(a)$ $\bar{\alpha}=10^{-5}$,  in  $(a)$ $\bar{\alpha}=10^{-3}$, and  in  $(c)$ $\bar{\alpha}=1$. \removed{In these figures, we had to restrict the limits of the plot to ensure good resolution.} }\label{fig:grid}}
\end{figure}


\section{Discussion}

It is becoming increasingly apparent that biases introduced by
optimization procedures, especially for under-determined problems, are
playing a key role in learning.  Yet, so far we have very little
understanding of the implicit biases associated with different
non-convex optimization methods.  In this paper we carefully study
such an implicit bias in a two-layer non-convex problem, identify it,
and show how even though there is no difference in the model class
(problems \eqref{eq:lstsq} and \eqref{eq:lstsq_u} are equivalent when
$d=n$, both with very high capacity), the non-convex modeling induces
a potentially much more useful implicit bias.

We also discuss how the bias in the non-convex case is much more
delicate then in convex gradient descent: since we are not restricted
to a flat manifold, the bias introduced by optimization depends on the
step sizes taken.  Furthermore, for linear least square problems
(i.e.~methods based on the gradients w.r.t.~$X$ in our formulation),
any global optimization method that uses linear combination of
gradients, including conjugate gradient descent, Nesterov acceleration
and momentum methods, remains on the manifold spanned by the
gradients, and so leads to the same minimum norm solution.  This is
not true if the manifold is curved, as using momentum or passed
gradients will lead us to ``shoot off'' the manifold.

Much of the recent work on non-convex optimization, and matrix
factorization in particular, has focused on global convergence:
whether, and how quickly, we converge to a global minima\removed{\cite{}}.  In
contrast, we address the complimentary question of {\em which} global
minima we converge to.  There has also been much work on methods
ensuring good matrix reconstruction or generalization based on
structural and statistical properties\removed{\cite{}}.  We do not
assume any such properties, nor that reconstruction is possible or
even that there is anything to reconstruct---for any problem of the
form \eqref{eq:lstsq} we conjecture that \eqref{eq:gradflowX} leads to the
minimum nuclear norm solution.  Whether such a minimum nuclear norm
solution is good for reconstruction or learning is a separate issue
already well addressed by the above literature.

We based our conjecture on extensive numerical simulations, with
random, skewed, reconstructible, non-reconstructible, incoherent,
non-incoherent, and and exhaustively enumerated problems, some of which is reported in Section \ref{sec:exp}.  We believe
our conjecture holds, perhaps with some additional technical
conditions or corrections.  We explain how the conjecture is related
to control on manifolds and the time ordered exponential and discuss a
possible approach for proving it.
\clearpage
\bibliography{low_rank_recovery,suriya}

\clearpage

\end{document}